\title{A Machine Learning Theory Perspective on Strategic Litigation} 
\author{Melissa Dutz}{Toyota Technological Institute at Chicago, USA}{melissa@ttic.edu}{https://orcid.org/0009-0008-0971-8285}{Melissa Dutz was supported in part by an NSF CSGrad4US fellowship under Grant No. 2313998. Any opinions, findings, and conclusions or recommendations expressed in this material are those of the author(s) and do not necessarily reflect the views of the U.S. National Science Foundation.}
\author{Han Shao\footnote{Work mostly done while at Toyota Technological Institute at Chicago}}{University of Maryland, USA \and \url{https://www.cs.umd.edu/~hanshao/}}{hanshao@umd.edu}{https://orcid.org/0009-0005-9206-1357}{Han Shao was supported in part by Harvard CMSA.}
\author{Avrim Blum}{Toyota Technological Institute at Chicago, USA \and \url{https://home.ttic.edu/~avrim/}}{avrim@ttic.edu}{https://orcid.org/0000-0003-2450-5102}{Avrim Blum was supported in part by the National Science Foundation under grant CCF-2212968, and by ECCS-2216899. Any opinions, findings, and conclusions or recommendations expressed in this material are those of the author(s) and do not necessarily reflect the views of the U.S. National Science Foundation.}
\author{Aloni Cohen}{The University of Chicago, USA \and \url{https://aloni.net}}{aloni@uchicago.edu}{https://orcid.org/0000-0002-3492-2447}{}
\authorrunning{M. Dutz, H. Shao, A. Blum, and A. Cohen} 
\keywords{Strategic Litigation, Machine Learning Theory, Law} 
\begin{document}

\maketitle

\begin{abstract}
\textit{Strategic litigation} involves bringing a case to court with the goal of having an impact beyond resolving the particular dispute at hand. In a common law system, one way a case may have far-reaching impact is by establishing new legal precedent that later courts must follow. In this paper, we explore strategic litigation from the perspective of machine learning theory. We consider an abstract model of a common law legal system where a lower court decides new cases by applying a decision rule learned from a higher court's past rulings. In this model, we explore the power of a \textit{strategic litigator}, who strategically brings cases to the higher court to influence the decision rule applied by the lower court in future cases. We explore questions including: What impact can a strategic litigator have? Which cases should a strategic litigator bring to court? Does it ever make sense for a strategic litigator to bring a case when they are sure the court will rule against them? We show that this strategic case selection problem has interesting structure, with even simple settings exhibiting counterintuitive phenomena. When cases are represented by points in one dimension and the lower court’s learning algorithm is nearest neighbor, or as points in $d$ dimensions and the lower court’s learning algorithm is a support vector machine, we characterize the set of inducible decision rules and develop algorithms for selecting an optimal set of cases to bring to the higher court given the strategic litigator’s objectives.
\end{abstract}

\section{Introduction}
Strategic litigation (also called impact litigation) involves bringing a legal case to court with the goal of creating some broader societal impact outside of resolving the case itself: for example, shaping future law. In a common law system, a decision on one case can establish precedent that future cases must follow. Proponents of a particular legal doctrine might therefore want to carefully select a case likely to be ruled on favorably, so that similar but perhaps more ambiguous cases will be decided favorably as well. A famous example of strategic litigation is \textit{Brown v. Board of Education}; although this case concerned school desegregation specifically, it was crafted to undermine the ``separate but equal'' doctrine of racial discrimination much more broadly. 

The United States has a hierarchical court system in which higher courts' rulings establish binding precedent that lower courts must follow. Lower courts must somehow determine how precedent applies to new cases. This process can be viewed as the lower courts learning a decision rule from precedent: a way to map information about a new case to a ruling based on precedent. This perspective motivates studying the application of precedent through the lens of learning theory, the study of learning functions from data.  

In this paper, we make an initial foray into studying strategic litigation from a machine learning theory perspective. We introduce a highly stylized, abstract model of a common-law legal system with three actors: a high court, a lower court, and a strategic litigator. The lower court decides new cases by applying a decision rule learned from a higher court’s past rulings. The strategic litigator strategically brings cases to the higher court to influence this learned decision rule, thereby affecting future cases. We explore questions including: What impact can a strategic litigator have? Which cases should a strategic litigator bring to court? Does it ever make sense for a strategic litigator to bring a case when they know the court will rule against them? Under two variants of our model, we characterize the set of decision rules a strategic litigator can induce, and provide algorithmic results for selecting an optimal set of cases to bring to court given the strategic litigator's objectives. Of course, studying such a simplistic model cannot necessarily tell us anything about the US court system in all its complexity. However, we hope it can still offer some insight into the dynamics of strategic litigation. 

This work has connections to other topics in machine learning theory, including teaching dimension (\cite{teaching_dimension}, \cite{teaching_linear_learner}) and clean-label data attacks (\cite{poison_frogs}, \cite{blum2021robust}). Previous works have also modeled legal decision making as a classifier (e.g., \cite{nn_1974}, \cite{branting2021scalable}, \cite{hartline_et_al}) but to our knowledge this is the first exploration of strategic impact litigation in a machine learning setting. It will be helpful to present our model before discussing these connections in more detail, so we will revisit them in Section \ref{sec:prev_work}. 

\section{Setting}
In this section, we describe our basic abstract model of the legal system, specify the instantiations of our model that we will use as case studies throughout this paper, and give an example illustrating strategic litigation under our model. 

\subsection{Abstract Model of the Legal System}
We introduce our basic abstract model of cases, courts and strategic litigators below. Our model does not aim to realistically represent a real-world court system. Instead, our goal is to describe a simple model of a common-law legal system in which strategic litigation is possible. This allows us to study the dynamics of strategic litigation with minimal confounding complexity. We will extend our model in Section \ref{sec:extensions} to consider overturning precedent.

\textbf{Modeling Case Information:} The key facts of cases are referred to as \textit{case fact patterns}. We represent case fact patterns as data points in some instance space. 

\textbf{Modeling the Courts:} We consider a simple two-level court system comprised of a high court and lower courts. Courts act as binary classifiers on case fact patterns, with positive and negative labels corresponding to decisions for and against the plaintiff, respectively. The high court and the low court decide cases differently.
The high court decides cases according to some binary labeling function $f^*$. The lower court learns a binary labeling function $f$ from precedent (those cases already decided/labeled by the higher court) using a deterministic learning algorithm $\mathcal{A}$. 
Denoting the set of historical labeled data $S_h = \{(x_1, f^*(x_1)), (x_2, f^*(x_2)), ..., (x_n, f^*(x_n))\}$, the classification function learned by the lower courts is $f \gets \mathcal{A}(S_h)$. Note that each time a new case is decided by the high court, the set of precedent $S_h$ is updated and the function learned by the lower courts is updated accordingly. We assume that $\mathcal{A}$ always produces a function consistent with all precedent (e.g. $f(x) = y ~ \forall (x,y) \in S_h$). Finally, we assume that $f^*$ is realizable, meaning that $f^*$ is in the hypothesis class $\mathcal{H}$ of functions that $\mathcal{A}$ returns.

\textbf{Modeling a Strategic Litigator:} We assume a strategic litigator has some goal function $g \in \mathcal{H}$ which describes how they would like cases to be decided, and we assume they know the high court's labeling function $f^*$ and the learning algorithm $\mathcal{A}$ used by the lower courts. Given $g$, the strategic litigator would like to bring some cases to be decided by the high court, such that the precedent they create influences the lower courts to rule on new cases according to $g$. 

More precisely, the strategic litigator would like to select a set of case fact patterns to be labeled by $f^*$ (call this labeled set $S_l$), such that the function learned by the lower courts based on the union of the existing labeled set $S_h$ and the added set $S_l$ is close to their goal function $g$. Note that in general, it may not be possible to force $\mathcal{A}$ to output exactly $g$. Therefore, the general goal of the strategic litigator is to select a set of case fact patterns such that the function $f$ output by $\mathcal{A}$ minimizes classification error (as defined by $g$) with respect to some distribution $\mathcal{D}$ of case fact patterns that the strategic litigator cares about. Formally:
\begin{multline}
  \text{Given $f^*$, $g$, $S_h$, find $S_l$ minimizing } 
  \mathds{E}_{x \sim \mathcal{D}}\!\Big[\mathds{1}\{g(x) \neq f(x)\}\Big] 
  \text{ where } f \gets \mathcal{A}(S_h \cup S_l).
\end{multline}
We primarily focus on the setting where the strategic litigator can select cases from some finite pool of case fact patterns, $P$. This captures the fact that only a finite set of existing cases can be brought to court (one cannot litigate a hypothetical case). However, we will also discuss the case where the strategic litigator can bring \textit{any} point in the instance space when this enables more interesting results. 

\subsection{Case Studies}
Throughout this paper, we will focus on two particular instantiations of our model:

\begin{enumerate}
    \item \textbf{1-Dimensional Nearest Neighbor Setting}: Case fact patterns are represented by points in 1-dimensional space, the hypothesis class $\mathcal{H}$ consists of all Boolean functions over $\mathds{R}$ that make a finite number of alternations between positive and negative labels, 
    and the lower court's learning algorithm $\mathcal{A}$ is a \textit{nearest neighbor classifier}, which labels a new point the same way as the closest point in its training data. 
    \item \textbf{$d$-Dimensional Linear Separator Setting}: Case fact patterns are represented by points in $d$-dimensional space, the hypothesis class $\mathcal{H}$ consists of linear separators, and the lower court's learning algorithm $\mathcal{A}$ is a \textit{support vector machine} (SVM).  Recall that SVM finds a maximum-margin linear separator (a separator which maximizes the distance between itself and the closest point from each class). The position of the separating hyperplane which SVM learns is completely determined by training points which lie on the margin (e.g., the points closest to the separator), which are called support vectors. 
\end{enumerate}

We study nearest neighbor classifiers and maximum-margin linear separators because they are simple and natural heuristics for decision-making based on historical data. For example, nearest neighbor captures the intuition that similar cases should be decided similarly. A linear separator is natural as it is the simplest way to divide a $d$-dimensional space into two classes, such as winning and losing cases. By maximizing the distance between the boundary and the most ambiguous `edge cases' in each class, maximum-margin linear separators are intuitively robust among possible linear separators.

Figure \ref{fig:example_sl} shows a simple example of strategic litigation in our 1-Dimensional Nearest Neighbor setting.

\begin{figure}[H]
    \centering
    \includegraphics[width=.6\linewidth]{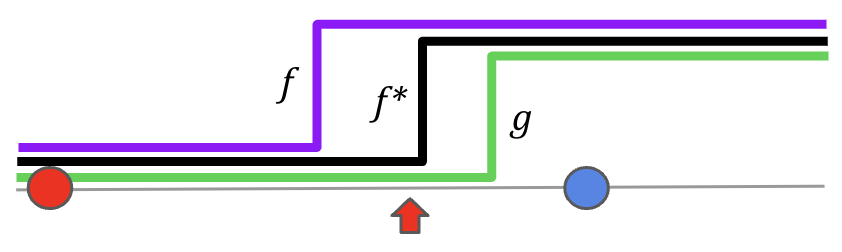}
    \caption{$f^*$ is the labeling function used by the high court, $f$ is the output of Nearest Neighbor given the two historical points labeled by $f^*$ (the red point is negative, the blue point is positive), and $g$ is the strategic litigator's goal function. To make Nearest Neighbor output $g$, the strategic litigator can bring a point where indicated by the red arrow, which will be labeled negative by $f^*$.}
    \label{fig:example_sl}
\end{figure}
\vspace{-1em} 

\section{Related Work}\label{sec:prev_work}

As mentioned earlier, our work is related to the concept of the \textit{teaching dimension} in learning theory. Introduced by \cite{teaching_dimension}, the teaching dimension measures the complexity of \textit{teaching}. Suppose a teacher wants to select the most helpful examples to efficiently teach any student a concept in a given concept class; the teaching dimension measures the minimum number of examples required. More formally, the teaching dimension is the minimum number of examples necessary to teach \textit{any} consistent learning algorithm a target function $f_t$ from a given hypothesis class $\mathcal{H}$ (e.g., it is the minimum number of examples such that $f_t$ is the only remaining consistent function in $\mathcal{H}$). In our setting, the strategic litigator faces a similar problem to the teacher: they would like to find a small set of examples to ``teach'' $g$ to the lower court's learning algorithm $\mathcal{A}$. This differs from the classic teaching dimension setting in two main ways. First, in the classic teaching setting, examples are labeled by the teacher's target function $f_t$. In our case, examples are not labeled by the strategic litigator's target function $g$ -- they are labeled by $f^*$. Second, the classic teaching problem requires a set of examples which will teach $\textit{any}$ consistent learner, while in our model the strategic litigator can focus on teaching a particular, known learner ($\mathcal{A}$). More recent work (\cite{teaching_linear_learner}) has studied the teaching dimension given a known learner, including SVM. In particular, \cite{teaching_linear_learner} show that the number of examples required to teach any particular linear separator to SVM is 2. This corresponds to our setting when $\mathcal{A}$ is SVM in the special case where $f^* = g$, there is no historical data, and the strategic litigator can select any case fact pattern in the instance space.

This work is also connected to work on clean label data poisoning attacks (see \cite{poison_frogs}, \cite{blum2021robust}). In a clean-label data poisoning attack, an attacker seeks to cause the learning algorithm to make a mistake on a target test instance by injecting strategically chosen, correctly-labeled examples into the algorithm's training set. In our work, the strategic litigator similarly seeks to manipulate the output of a learning algorithm ($\mathcal{A}$) by injecting correctly labeled examples (case fact patterns labeled by $f^*$) into its training set. The clean-label data poisoning model would correspond to a variant of our setting in which the strategic litigator's goal is to cause the lower courts to rule favorably on a particular future case, rather than trying to bring the function $\mathcal{A}$ learns close to some goal function $g$ overall.

A number of previous works such as \cite{nn_1974}, \cite{branting2021scalable} and \cite{hartline_et_al} have also modeled legal decisions as the output of some learning algorithm; the key novelty of our work lies in exploring strategic impact litigation under a learning model. Like us, \cite{nn_1974} model legal decision making as a nearest neighbor classifier; they also consider in more detail how to represent case fact patterns as points in an instance space. The work of \cite{hartline_et_al} is perhaps most similar to ours in this vein: they also provide a machine learning perspective on the common law system and model strategic agents within this system (although these agents are not strategic litigators in the sense of strategic impact litigation we are focused on). As in our setting, \cite{hartline_et_al} models courts as deciding cases brought by strategic agents according to some decision rule, but unlike in our setting, this rule is unknown. The agents in \cite{hartline_et_al} are utility maximizers. They choose to bring a case or settle out of court depending on which has higher expected utility, taking account of the cost of litigation, utility of various outcomes, and the agents' posterior over the decision rule given past cases. The paper explores whether the system as a whole converges to the unknown decision rule. Our focus on strategic impact litigation requires a different model. First, we consider a hierarchical system with a higher court establishing precedent for a lower court to follow. Second, the agent's goal is to affect the lower court's decision rule for future cases. The utility associated with the individual case is not relevant.

\section{Strategic litigation in one dimension}
Despite its simplicity, the 1-Dimensional Nearest Neighbor setting is already quite rich. We begin by highlighting some interesting structural observations exhibited by our model, even when the litigator is able to bring any fact pattern in the instance space  as a case. Then we give a dynamic programming algorithm for optimally selecting cases from a finite set of fact patterns.

We adopt a geometric view that simplifies the exposition throughout this section. Recall that the strategic litigator's with goal function $g$ wants to minimize $\mathds{E}_{x \sim \mathcal{D}}[\mathds{1} [g(x) \neq f(x)]]$, where $f\gets \mathcal{A}(S_h \cup S_l)$ is the classifier applied by the lower court and $\mathcal{D}$ is some distribution of case fact patterns that the strategic litigator cares about. In 1 dimension, we can view the probability mass under $\mathcal{D}$ as a measure of distance. By doing so, the function the strategic litigator wants to minimize is the total length of all intervals along which $f$ and $g$ disagree. We call this quantity the \textit{discrepancy}. 

\subsection{Structural Observations} 
In these examples, we assume the instance space is the real interval $[0,1]$ for simplicity, and we assume the strategic litigator is able to bring any fact pattern in the instance space as a case. Without loss of generality, we assume the output of Nearest Neighbor when there is no historical data is the all-positive function. Recall that $f^*$ is the labeling function of the high court, $f$ is the output of $\mathcal{A}$ (Nearest Neighbor) on any historical points labeled by $f^*$, and $g$ is the goal function of the strategic litigator. 

\textbf{Observation 1: It can be arbitrarily helpful for the strategic litigator to bring a case where the high court will rule against them.}

\begin{figure}[h!]
    \centering
    \ifdim\columnwidth<\textwidth
        \includegraphics[width=0.9\columnwidth]{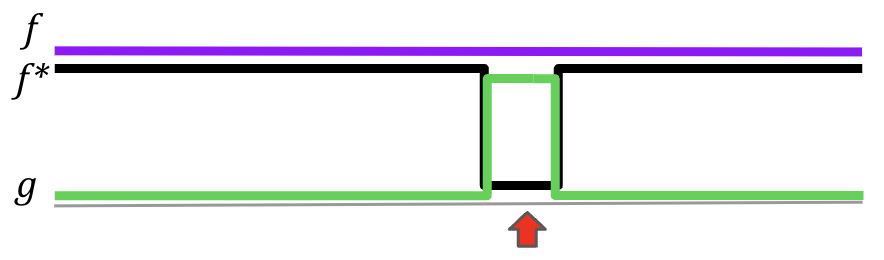}
    \else
        \includegraphics[width=0.6\textwidth]{ex_disagree.png}
    \fi
    \caption{\label{fig:obs1} An example illustrating Observation 1: choosing a losing case (indicated by the red arrow) brings the classifier produced by Nearest Neighbor much closer to the strategic litigator’s
goal function.}
\end{figure}
\vspace{-1em} 
    
For example, in Figure \ref{fig:obs1}, $f^*$ and $g$ disagree on every point, so no matter which case the strategic litigator brings, the high court will rule against them. However, since $f$ is all-positive and $g$ is mostly negative, it benefits the strategic litigator to bring a case which will be classified negatively (such as the point indicated by the red arrow). This will cause Nearest Neighbor to output the all-negative function, which is much closer to $g$ than the current all-positive $f$. Choosing this point could have arbitrarily high benefit as the positive region of $g$ (and the corresponding negative region of $f^*$) makes up a smaller and smaller fraction of the overall interval.

\textbf{Observation 2: Choosing a case myopically can cause unnecessary, irreversible error.}

\begin{figure}[h!]
    \centering
    \ifdim\columnwidth<\textwidth
        \includegraphics[width=0.95\columnwidth]{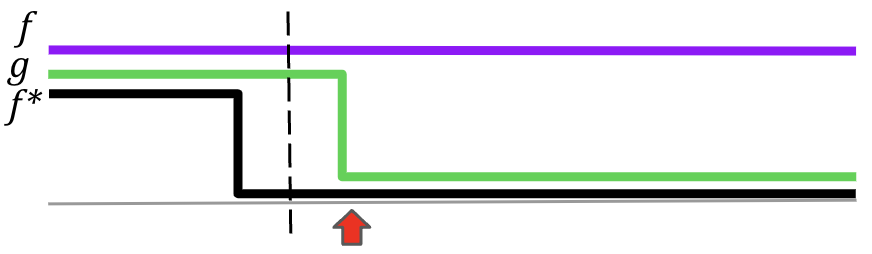}
    \else
        \includegraphics[width=0.6\textwidth]{ex_myopic_extra_error.png}
    \fi
    \caption{\label{fig3} An example illustrating Observation 2: greedily choosing a currently-optimal case
(indicated by the red arrow) causes irreversible error that could have been avoided by planning
ahead.}
\end{figure}
\vspace{-1em} 

One natural question is whether a strategic litigator can use a simple greedy strategy, choosing cases one at a time by selecting the single most helpful case at each step. Figure \ref{fig3} shows that such a strategy can create unnecessary, irreversible error. 
In particular, since the current $f$ is all-positive, bringing any positive point does not alter $f$, but bringing any negative point will be bring $f$ closer to $g$ since it will flip $f$ to the all-negative function. Myopically, then, the optimal choice is any negative point. Suppose the strategic litigator even tie-breaks among these options in a smart way by choosing one of the negative points which $g$ would also label negatively. The strategic litigator might then choose the negative point indicated by the arrow. Choosing this point would create irreversible error: no matter what positive points they bring in the future, the strategic litigator can no longer bring the boundary of $f$ further to the right than the dashed line. This error is unnecessary as it would have been possible to exactly achieve $g$ (by bringing any positive point to the left of $g$'s boundary, and a negative point an equal distance to the right of $g$'s boundary).

\textbf{Observation 3:  1-step lookahead is also not enough to be optimal.}

\begin{figure}[h!]
    \centering
    \ifdim\columnwidth<\textwidth
        \includegraphics[width=\columnwidth]{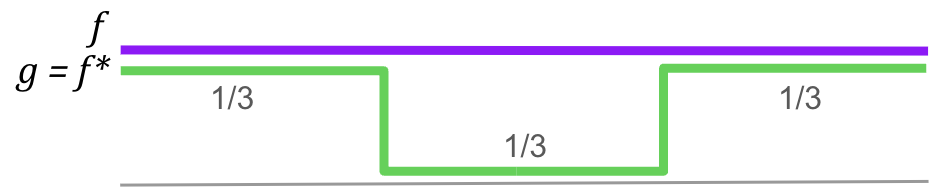}
    \else
        \includegraphics[width=0.6\textwidth]{ex_2_step.png}
    \fi
    \caption{\label{fig4} An example illustrating Observation 3: greedily choosing a currently-optimal pair of cases cannot improve the strategic litigator’s error, even though the strategic litigator’s goal function can be achieved.}
\end{figure}

One might observe that choosing the best pair of points would be sufficient to avoid the problem in the previous example. Since 2 points define a boundary under Nearest Neighbor, it's natural to wonder whether choosing the best pair of points might be enough to overcome the pitfalls of myopia in general. However, even looking ahead to choose the best pair of points, the strategic litigator can fail to minimize their error. For example in Figure \ref{fig4}, no pair of points can improve the strategic litigator's error, so a strategic litigator who chooses pairs of points myopically will not choose any, failing to achieve $g$ even though it is possible (since $g=f^*$). If the strategic litigator brings 2 positive points, $f$ does not change. If the strategic litigator brings 2 negative points, $f$ will become the all-negative function, making the strategic litigator worse off. If the strategic litigator brings a positive and a negative point, $f$ will become some function with a single boundary between a negative and positive region. At best, the strategic litigator will improve with respect to the entire negative third of $g$ by making this region of $f$ negative, but $f$ must then be negative throughout one of the neighboring positive thirds of $g$, so the total discrepancy between $f$ and $g$ does not change.

\textbf{Observation 4: The strategic litigator might need to make things worse in the short term in order to make things better in the longer term.}

\begin{figure}[h!]
    \centering
    \ifdim\columnwidth<\textwidth
        \includegraphics[width=0.9\columnwidth]{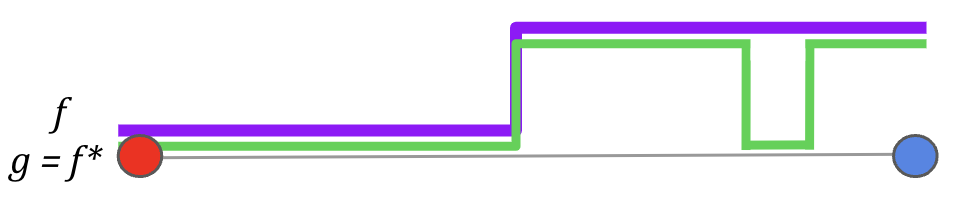}
    \else
        \includegraphics[width=0.6\textwidth]{ex_worse_before_better.png}
    \fi
    \caption{\label{fig5} An example illustrating Observation 4: bringing any individual case increases the
strategic litigator’s error, even though the strategic litigator’s goal function can be achieved by
bringing several cases. The red and blue dots represent historical negative and positive points,
respectively.}
\end{figure}
\vspace{-1em} 

In the example shown in Figure \ref{fig5}, bringing any individual case makes the strategic litigator worse off (it increases the discrepancy between $f$ and $g$), but bringing several cases benefit the strategic litigator (by making $f$ equal $g$). Bringing any negative point will shift the boundary of $f$ to the right, increasing the discrepancy between $f$ and $g$; any positive point will shift the boundary of $f$ to the left, increasing the discrepancy between $f$ and $g$. However, since $g=f^*$, the strategic litigator can make $f$ match $g$ exactly by bringing several points. Note that this also highlights the pitfalls of a myopic strategy, since a myopic strategic litigator would not bring any points at all, failing to achieve $g$ even though it is possible.

\subsection{Optimally selecting cases in one dimension} 
This section presents a dynamic programming algorithm for optimally selecting a subset of cases $X$ from a finite pool of fact patterns $P$ for the 1-Dimensional Nearest Neighbor setting, given labeled historical precedent $S_h$, goal function $g$, and distribution $\mathcal{D}$ over case fact patterns. The high level intuition is as follows: when choosing a set of points one by one from left to right, under Nearest Neighbor the error of the solution after adding a new point $p$ is unchanged from the previous solution over the region from the left boundary of the domain to the midpoint $m$ between $p$ and the closest point to its left. To the right of $m$, the error is now determined only by $p$ and any relevant historical precedent (points from $S_h$ to the right of $p$). This allows us to build an error-minimizing solution iteratively. 

\begin{theorem} \label{thm:alg_correct}
    In the 1-Dimensional Nearest Neighbor setting, for any goal function $g$, set of case fact patterns $P$, distribution $\mathcal{D}$ over $\mathds{R}$, set of historical precedent $S_h$, 
    a strategic litigator can select a subset of case fact patterns $\{x_1, x_2, ...\} \subseteq P$  such that $f \gets \mathcal{A}(S_h \bigcup \allowbreak \{(x_1, f^*(x_1)), \dots\})$ minimizes $\mathds{E}_{x \sim \mathcal{D}}[\mathds{1} [g(x) \neq f(x)]]$ over all possible subsets of $P$ in $O(|P|^2)$ calls to a nearest-neighbor oracle and a discrepancy oracle which computes the error of the current function with respect to $g$.
\end{theorem}

The proof of Theorem \ref{thm:alg_correct} can be found in \Cref{appendix_alg_1}. 

\begin{algorithm*}[]
    \small    
    \caption{Select a subset of case fact patterns $\{x_1, x_2, ...\} \subseteq P$  such that $f \gets \mathcal{A}(S_h \bigcup \{(x_1, f^*(x_1)), ...\})$ minimizes $\mathds{E}_{x \sim \mathcal{D}}[\mathds{1} [g(x) \neq f(x)]]$.}\label{dp_1d}
    \begin{algorithmic}
         \STATE \textbf{Inputs:}\\ \begin{itemize}
             \item $f^*$: the high court's labeling function.
             \item  $g$: the strategic litigator's goal function.
             \item  $\mathcal{D}$: the strategic litigator's distribution of case fact patterns of interest.
             \item $S_h$: historical labeled data.
             \item $P$: collection of available case fact pattern points.
         \end{itemize}
        \STATE \textbf{Output:} A subset of case fact patterns $X \subseteq P$ which minimizes the strategic litigator's expected error with respect to $g$ and $\mathcal{D}$. \\
        \STATE \textbf{Utilities:} \begin{itemize}
            \item Let \textbf{nearest\_neighbor}(\textit{interval}, \textit{S}) compute the standard nearest neighbor classifier over the interval \textit{interval} given points in $S$. \\ 
            \item Let \textbf{discrepancy}(\textit{f}, \textit{g}, \textit{interval}, $\mathcal{D}$) compute the sum of the lengths of intervals within \textit{interval} throughout which $f(x) \neq g(x)$, where units of distance are probability mass under $\mathcal{D}$.
        \end{itemize}
    \end{algorithmic}        
    \begin{algorithmic}[1]  
        \STATE Rescale coordinates so all inputs (points, functions, $\mathcal{D}$) are with respect to the interval $[0,1]$
        \STATE Sort $P$ from leftmost to rightmost point
        \STATE Construct a DP table $T$ which is a list of length $p$, where $p$ is the number of points in $P$. $T[j]$ will store the strategic litigator's minimum total error if $P[j]$ is the rightmost point selected.
        \STATE $f_0 =$ \textbf{nearest$\_$neighbor}([0,1], $S_h \bigcup \{(P[0], f^*(P[0]))\}$) \textit{output of Nearest Neighbor when choosing only $P[0]$}
        \STATE $T[0] = $ \textbf{discrepancy}($f_0$, $g$, [0,1], $\mathcal{D}$) \textit{error when choosing only $P[0]$}
        \FOR{$j = 1$ to $p-1$}
             \STATE $f_j =$ \textbf{nearest$\_$neighbor}([0,1], $S_h \bigcup \{(P[j], f^*(P[j]))\}$) \textit{output of Nearest Neighbor when choosing only $P[j]$}
             \STATE $err\_only\_j =$ \textbf{discrepancy}($f_j$, $g$, [0,1], $\mathcal{D}$) \textit{error when choosing only $P[j]$}
             \STATE $err\_j\_after\_i = \min_{i \in [j-1]}$ \textbf{error}($i$, $j$) \textit{calculate the lowest possible error when choosing point $j$ if $i$ is the closest chosen point to its left}
             \STATE $T[j] = \min\{err\_only\_j, err\_j\_after\_i\}$
        \ENDFOR
        \STATE $min\_err\_with\_points = \min_j T[j]$
        \STATE $f_\emptyset = $\textbf{nearest$\_$neighbor}([0,1], $S_h$) \textit{output of Nearest Neighbor when choosing no points}
        \STATE $err\_no\_points =$ \textbf{discrepancy}($f_\emptyset$, $g$, [0,1], $\mathcal{D}$) \textit{error when choosing no points}
        \IF{$err\_no\_points \leq min\_err\_with\_points$} 
            \RETURN $\emptyset$ 
        \ELSE
        \RETURN the set of points corresponding to $\min_j T[j]$. We can find these by retracing the entries of $T$ we used in the minimum cost solution, and adding each of the corresponding points to our output set. 
        \ENDIF
    \end{algorithmic} 
    \begin{algorithmic}
        \STATE
        \STATE \textit{Local helper function which computes the minimum error when the $ith$ and $j$th points are the two rightmost selected points (the $j$th point is to the right of the $i$th point)}
        \STATE \textbf{procedure} \textbf{error}($i$, $j$):  
    \end{algorithmic}
\begin{algorithmic}[1]
        \STATE $f_i$ = \textbf{nearest$\_$neighbor}([P[i],1], $S_h\bigcup\{(P[i], f^*(P[i]))\}$) \textit{output of Nearest Neighbor from $P[i]$ onward when $P[i]$ is the rightmost selected point}
        \STATE $err\_after\_i\_no\_j = $ \textbf{discrepancy}($f_i$, $g$, [P[i],1], $\mathcal{D}$) \textit{error to the right of $P[i]$ without j}
         \STATE $f_{ij}$ = \textbf{nearest$\_$neighbor}([P[i],1], $S_h \allowbreak \bigcup\ \allowbreak \{ \allowbreak (P[i], f^*(P[i])), \allowbreak (P[j], f^*(P[j])\}$) \allowbreak \textit{output of Nearest Neighbor from $P[i]$ onward when we choose $P[j]$ next after $P[i]$}
        \STATE $err\_after\_i\_with\_j = $\textbf{discrepancy}($f_{ij}$, $g$, [P[i],1], $\mathcal{D}$) \textit{error to the right of $P[i]$ when we choose $P[j]$ next}
        \RETURN $T[i] - err\_after\_i\_no\_j + err\_after\_i\_with\_j$ ~ \textit{minimum overall error when choosing $P[j]$ immediately after $P[i]$}
    \end{algorithmic}
\end{algorithm*}

\remark{If the strategic litigator also wants to limit the number of cases they bring, \Cref{dp_1d} can be easily extended to find the optimal set of at most $k$ case fact patterns. We can simply extend the DP table to have $k$ rows of length $p$ instead of a single one. $T[j][i]$ would then store the minimum error achievable with at most $j$ points when the $i$th point is the rightmost point selected, and the table could be filled row by row, from left to right.}

\section{Strategic litigation in $d$ dimensions}

This section considers the $d$-Dimensional Linear Separator setting. We first characterize which goal functions $g$ are \emph{achievable} in principle, assuming the strategic litigator is free to bring any cases of his choosing. We show that all achievable $g$ are achievable using only 2 cases. We then discuss two more challenging settings: when $g$ is not achievable, and when the strategic litigator is restricted to bringing cases from a finite pool $P$.

\subsection{Which goal functions are achievable?}
 In an ideal case, the strategic litigator would like to select a set of case fact patterns such that function output by $\mathcal{A}$ (the lower court's learning algorithm) is exactly $g$. When can the strategic litigator force $\mathcal{A}$ to produce exactly $g$, assuming they can select any case fact pattern in the space? We answer this question in Theorem \ref{thm:acheivable} below.

\begin{definition}\label{def:achievable}[Achievable function]
We call a function $g$ achievable if there exists some set $\{x_1, ..., x_m\}$ of case fact patterns such that
$g \gets \mathcal{A}\big(\{(x_1, f^*(x_1)), \allowbreak ..., \allowbreak (x_m, f^*(x_m))\} \cup S_h\big)$,
where $f^*$ is the labeling function of the high court, $\mathcal{A}$ is the learning algorithm of the low court, and $S_h$ is the set of historical data.
\end{definition}

\begin{theorem}\label{thm:acheivable} Let $S_h$ be the set of historical cases labeled by $f^*$. Let $g$ be some linear function and let $\theta$ be the angle between the normal vectors of $g$ and $f^*$ (call these $w_g$ and $w_{f^*}$ respectively) in degrees. Assume the strategic litigator can bring any case fact pattern in the space. Then, $g$ is achievable if and only if one of the following conditions hold: 
\begin{itemize}
    \item $0 < \theta < 90$, there is no historical data in the disagreement region of $f^*$ and $g$ (i.e. $\{(x, y) \in S_h: f^*(x) \neq g(x)\} = \emptyset$), and there are no points directly on the boundary of $g$.
    \item $\theta = 0$ (i.e. $f^*$ and $g$ are parallel and positive in the same direction), and there is no historical data less than distance $\delta$ from $g$, where $\delta$ is the maximum of the distances from $g$ to the positive region defined by $f^*$ and to the negative region defined by $f^*$. 
\end{itemize}
\noindent
Moreover, if g is achievable, it is achievable using 2 cases.
\end{theorem}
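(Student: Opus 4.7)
The plan is to prove the ``iff'' and the ``moreover'' clause together, by giving an explicit two-point construction for sufficiency and invoking SVM duality for necessity. Throughout, let $w_g$, $w_{f^*}$ be unit normals of $g$, $f^*$ with offsets $c_g$, $c_{f^*}$, so $g_+ = \{x: w_g \cdot x > c_g\}$ and similarly for $f^*_+$.

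For sufficiency in the case $0 < \theta < 90$, since $g$ and $f^*$ are non-parallel their hyperplanes intersect in an affine subspace of codimension $2$. I pick a point $p \in g \cap f^*$ and a radius $r > 0$ strictly smaller than $\min_{h \in S_h} d(h, g)$ (positive because no historical point lies on $g$), and add the two cases $p + r w_g$ and $p - r w_g$. Since $p \in f^*$, a direct calculation gives that these points have $f^*$-signed distances $+ r \cos\theta$ and $- r \cos\theta$; because $\cos\theta > 0$ they are labeled $+$ and $-$ by $f^*$ respectively, matching $g$. Their midpoint is $p \in g$, they sit at equal distance $r$ on opposite sides perpendicular to $g$, and $S_h$ is both correctly classified by $g$ (no disagreement region) and strictly outside the margin (by choice of $r$); hence $g$ is the unique max-margin separator of $S_h \cup S_l$. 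The case $\theta = 0$ is analogous: for any $p \in g$ and any $r$ with $\delta < r \le \min_{h \in S_h} d(h, g)$, the points $p \pm r w_g$ are correctly labeled by $f^*$ (the negative one lies in $f^*_-$ because $r > \delta$) and the same SVM argument applies. This yields achievability with two points.

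For necessity, assume SVM outputs $g$ on $S_h \cup S_l$. Since $g$ must classify $S_h$ correctly, no historical point lies in the disagreement region; and when $\theta > 0$, the hard-margin condition forbids any training point from lying on $g$. For the angle, I invoke SVM duality: the optimal weight vector has the form $w = \sum_i \alpha_i y_i x_i$ with $\alpha_i \ge 0$, $\sum_i \alpha_i y_i = 0$, and at least one $\alpha_i > 0$ on each side. Writing $w = \lambda w_g$ with $\lambda > 0$ and using that positive training points satisfy $w_{f^*} \cdot x_i > c_{f^*}$ while negative ones satisfy $w_{f^*} \cdot x_i < c_{f^*}$ (because their labels come from $f^*$), the $c_{f^*}$ terms cancel via the bias constraint to yield $w_{f^*} \cdot w > 0$ strictly; hence $\cos\theta > 0$ and so $\theta < 90$. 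In the $\theta = 0$ subcase, the negative support vector must lie in $f^*_-$, which forces the SVM margin to exceed $\delta$, which in turn forces every historical point to sit at distance at least $\delta$ from $g$.

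The main obstacle I expect is establishing \emph{strict} positivity of $w_{f^*} \cdot w$ in the duality step: this requires invoking the existence of an active (nonzero) dual variable of each label and verifying that the corresponding support vectors sit strictly off $f^*$'s boundary, rather than just weakly on the correct side. A secondary delicate step is selecting $r$ in the sufficiency construction so that the two new points are simultaneously correctly labeled by $f^*$, equidistant from $g$ on opposite sides, and outside any historical interference; a small case analysis on the relative placement of $g$, $f^*$, and $S_h$ handles this cleanly.
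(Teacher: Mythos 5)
Your proof is correct and follows essentially the same route as the paper's: two points placed at $p \pm r w_g$, equidistant and perpendicular to $g$, for sufficiency, and for necessity the observation that the SVM weight vector is a non-negative signed combination of training points, together with hard-margin and consistency constraints. The one place you are a bit more careful than the paper is the angle argument, where you invoke the dual bias constraint $\sum_i \alpha_i y_i = 0$ to cancel the offset $c_{f^*}$; the paper's one-line version of this step is only immediate when $f^*$ passes through the origin, so your version is a mild tightening rather than a different approach. The ``main obstacle'' you flag at the end resolves directly: any support vector labeled negative by $f^*$ lies strictly in $f^*_-$ (boundary points of $f^*$ are classified positive), and SVM has at least one negative support vector with $\alpha_i > 0$, so $w_{f^*}\cdot w > 0$ strictly. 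One shared imprecision you inherit from the paper: in the $\theta = 0$ construction the interval $\delta < r \le \min_{h\in S_h} d(h,g)$ is empty if some historical point lies at distance exactly $\delta$ from $g$; both your proof and the paper's gloss over this boundary case.
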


    \begin{figure}[]
        \centering
        \includegraphics[scale=.4]{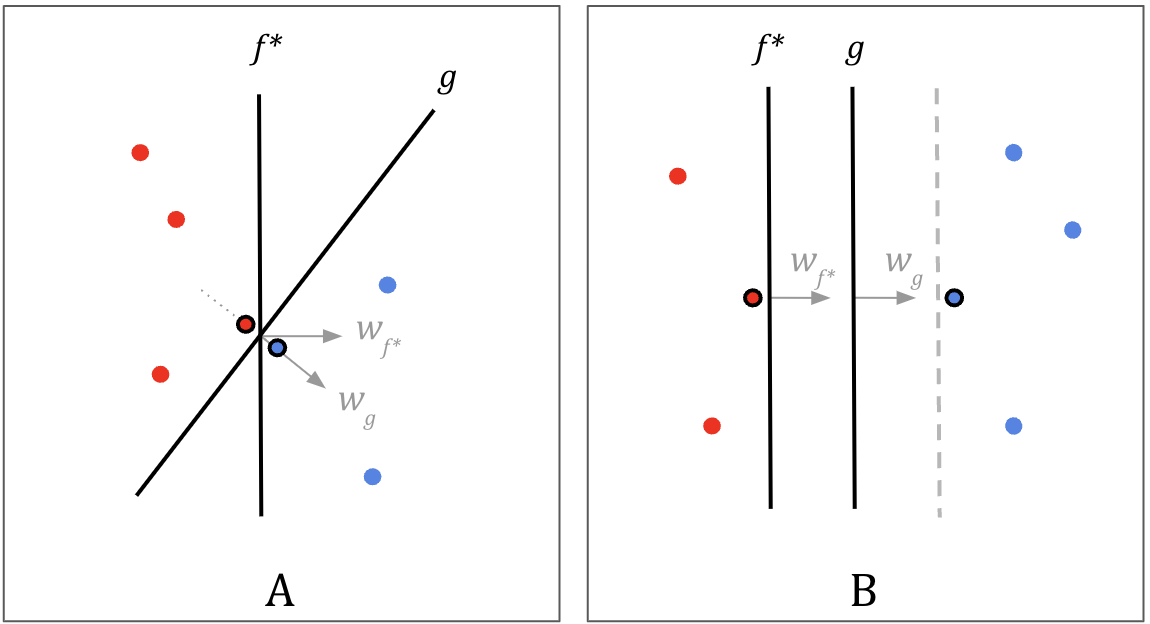}
        \caption{Frame A illustrates the first case from Theorem \ref{thm:acheivable}. The two points with bolded border lying on $w_g$ and $-w_g$ will become the support vectors, producing $g$ (the other points illustrate historical data). Frame B illustrates the second case; again, the two points with bolded border will become the support vectors, producing $g$. The dashed line marks distance $\delta$ away from $g$ on the right. \label{fig:thm}}
    \end{figure}

\begin{proof}
    First, we show that SVM cannot produce any $g$ whose normal vector $w_g$ has angle $\geq$ 90 degrees to the normal vector of $f^*$. SVM produces some linear classifier whose normal vector is a linear combination of the data points, where the weights of the data points have the same sign as the their label. This implies that the normal vector of any function SVM can produce using data labeled by $f^*$ has a positive inner product with $w_{f^*}$. Two vectors with a positive inner product have angle $< 90$ degrees.\\
    The rest of the negative results are easy to see: If any point $p$ lies directly on $g$, we cannot teach $g$ itself to SVM (although we can get arbitrarily close). Suppose WLOG $p$ is positive; since SVM produces a maximum-margin classifier, the classifier it produces can only ever be as close to $p$ as to the closest negative point the strategic litigator can bring; this distance cannot be 0 when $g$ is a linear separator. Furthermore, SVM cannot produce any $g$ which is inconsistent with any historical data, because SVM is guaranteed to find a consistent function in $\mathcal{H}$ if one exists, and $f^*$ is such a function. Finally, suppose $f^*$ and $g$ are parallel and positive in the same direction, and there is historical data within distance $\delta$ of $g$, where $\delta$ is the maximum of the distances from $g$ to the positive region defined by $f^*$ and to the negative region defined by $f^*$. If any point is less than $\delta$ away from $g$, WLOG let it be positive. The closest negative point must be at least $\delta$ away from $g$ by definition of $\delta$, so SVM cannot produce $g$ as it will produce some classifier with equal minimum margin on both sides. 

    It remains to show that we can produce any $g$ which does not violate the above constraints, and do so by bringing just 2 points. In the special case where $f^*=g$ and there is no historical data, the results of \cite{teaching_linear_learner} apply: they showed that the number of points required to teach a particular separator to SVM is 2. In general, if $f^*$ and $g$ are parallel in the same direction, we can choose one positive and one negative point equidistant from $g$ along the normal vector of $g$, at least as close to $g$ as any historical data (any pair of equidistant points at least $\delta$ away on either side of $g$ will suffice; we can find such points since we assume that there are no historical points less than $\delta$ away from $g$). If $f^*$ and $g$ intersect, without loss of generality let the intersection point be the origin. Then it suffices to choose one positive and one negative point along the normal vector of $g$, arbitrarily closer to $g$ than the distance of any historical point to $g$. In either case, since the two points we choose have different labels, and are equidistant from $g$ on either side of $g$ along a line perpendicular to $g$, $g$ is the maximum margin separator with respect to only this pair of points. The two points we choose must be closer to each other than any other pair of points from different classes are to each other, so they will become the only support vectors (the only points which determine the output of SVM). Therefore, SVM will produce $g$ with the addition of these two points.  
\end{proof}

\subsection{Approximating non-achievable goals functions}\label{sec:not_achievable}

In case the strategic litigator's goal function $g$ is not achievable (see Definition \ref{def:achievable}), the strategic litigator should choose an achievable function $g'$ which minimizes the strategic litigator's classification error (the error according to $g$, with respect to whatever distribution $\mathcal{D}$ of case fact patters the strategic litigator cares about). Once the strategic litigator has selected this achievable proxy goal $g'$, our results from Theorem \ref{thm:acheivable} apply, and the strategic litigator can achieve $g'$ by bringing two case fact patterns as described in the proof of Theorem \ref{thm:acheivable} above. 

The remaining task in this case is therefore to find an achievable $g'$ which minimizes $\mathds{E}_{x \sim \mathcal{D}}[\mathds{1} [g(x) \neq g'(x)]]$. The strategic litigator can find $g'$ using the following simple brute force method. Draw a sample $S$ of points from $\mathcal{D}$, and enumerate all achievable (according to the constraints specified in Theorem \ref{thm:acheivable}) SVM classifiers which correspond to distinct labelings of points in $S$. Choose as $g'$ a classifier in this set which minimizes the number of misclassified points in $S$. Note that by the Sauer-Shelah Lemma \cite{SAUER1972145}, the number of effective classifiers is polynomial in $|S|$. When the size of the sample is large enough, this approximates the best achievable classifier with respect to error on $\mathcal{D}$. Using a standard uniform convergence argument and the fact that the VC dimension of a $d$-dimensional linear classifier is $d+1$, $|S| = \widetilde{O}(\frac{d}{\epsilon^2})$ is enough to guarantee that the error of any classifier $g'$ on $S$ is $\epsilon$-close to the true error over $D$ for some $\epsilon$.

\subsection{Optimally selecting cases from a finite pool}
Suppose the strategic litigator can only select case fact patterns from a limited pool $P$. The output function of SVM is determined entirely by its support vectors (points on the margin). In $d$ dimensions, there can be at most $d+1$ linearly independent support vectors, so any function achievable using \textit{any} subset of $P$ can also be achieved using a subset of size at most $d+1$. Trivially, then, the strategic litigator can enumerate the set $F$ of all achievable functions by enumerating the functions induced by  each of the $O(n^{d+1})$ subsets of size $\leq d+1$ in $P$ (for a given subset $\{x_1, x_2, ...\}$, the corresponding separator is $\mathcal{A}(S_h\bigcup\{(x_1, f^*(x_1)), (x_2, f^*(x_2)),... \})$). It follows that if $g \in F$, it is achievable and the strategic litigator has already found the corresponding subset of $P$ needed to achieve it. Otherwise, if $g \notin F$, $g$ is not achievable. In that case, just as described above in Section \ref{sec:not_achievable}, the strategic litigator can choose a $g' \in F$ which approximately minimizes their classification error on their distribution of interest ($\mathcal{D}$) by choosing $g'$ which minimizes their error on a sufficiently large sample from $\mathcal{D}$ (the same sample complexity bound holds here). In this case, the strategic litigator has already found the corresponding subset of $P$ to teach $g'$. 

\section{Extension: Overturning Precedent}\label{sec:extensions}
We also study the case where there is some historical precedent which does not agree with the high court's current reasoning $f^*$, and model a process for overturning precedent. A change in the high court's labeling function could correspond to a change in justices, or simply to a change in the existing justices' views.

Let $S_h$ be the set of all historical cases and their labels. These cases could have been labeled by $f^*$ or by earlier high court function(s).

We assume the current high court has some limited power to throw out precedent which does not agree with their current reasoning represented by $f^*$ (e.g, points whose label does not match the label $f^*$ would assign). If the high court throws out some precedent (a set of labeled data points), those data points are removed from the set of precedent $S_h$. 

We assume the high court only throws out precedent when deciding a new case (e.g. classifying a new fact pattern $x$), and then only if a certain condition holds: the high court only throws out any precedent if when the newly labeled point ($x$, $f^*(x)$) is added to the data set $S_h$, there is no longer any function in the hypothesis class $\mathcal{H}$ which is consistent with all the data in $S_h$. 

When the high court does throw out precedent, we assume they only remove points whose label disagrees with $f^*$. Moreover, we assume they do not gratuitously throw out precedent: we assume they throw out some minimal set such that there exists a function in $\mathcal{H}$ which is consistent with the remaining data. 

\textbf{Example:} Figure \ref{fig:overturning_precedent} illustrates our model of overturning precedent.

    \begin{figure}[h]
        \centering
        \includegraphics[scale=.4]{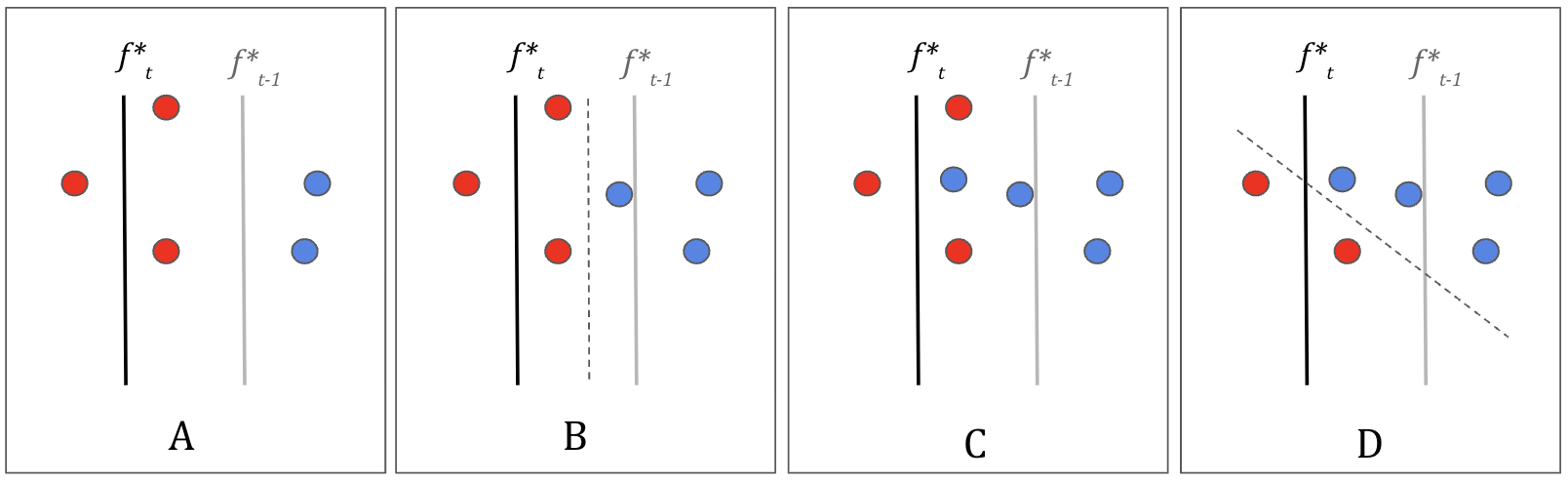}
        \caption{Frame A shows historical data labeled by a previous high court function $f^*_{t-1}$, when a new high court function $f_t^*$ arrives. Frame B shows a new point arriving which is classified positively by $f_t^*$. No precedent is overturned as there still exists a linear separator consistent with the data (e.g., the dashed line). Frame C shows another point arriving which is classified positively by $f^*$. Now there is no longer a linear separator consistent with the data, so the high court must remove some subset of the points whose labels it disagrees with (the two negative points to the right of $f^*_t$). Frame D illustrates if the high court chooses (arbitrarily) to remove the top negative point. It does not remove both because it only removes a minimal set such that there is a linear separator consistent with the remaining data (e.g., the dashed line). }\label{fig:overturning_precedent}
    \end{figure}

 We now return to considering whether the strategic litigator can achieve their goals in this extended setting. Notice that the strategic litigator may now need to strategize about how to force the high court to overturn precedent which disagrees with $g$. We assume the strategic litigator knows the high court's model for overturning precedent as described above, but does not know which minimal set of precedent the high court will choose to remove if there are multiple possibilities.

\subsection{Results in 1-Dimensional Nearest Neighbor Setting}
This extension is trivial in our 1-Dimensional Nearest Neighbor setting. Under a Nearest Neighbor classifier and our model for overturning precedent, the only way to get the high court to remove a historical point is to bring that exact point again and have it labeled the opposite way. If any existing points whose label disagrees with $f^*$ are in the pool of case fact patterns the strategic litigator can choose from, Algorithm \ref{dp_1d} is already set up to evaluate whether the strategic litigator should choose them -- only a minor modification is needed to make sure we compute our Nearest Neighbor error with respect to the new label when a duplicate point is chosen.

\subsection{Results in $d$-Dimensional Linear Separator Setting}
Now we consider this extension in our $d$-dimensional linear separator setting, assuming the strategic litigator can choose any case fact pattern in the space. If $g$ is achievable according to the constraints specified in Theorem \ref{thm:acheivable} when taking into account only historical data labeled by $f^*$,  $g$ is still achievable with the addition of historical data whose labels disagree with $f^*$, so long as these new points also conform to the achievability constraints described in Theorem \ref{thm:acheivable} (e.g., there is no data within distance $\delta$ of $g$ in the case where $g$ is parallel to $f^*$, and no data in the disagreement region of $f^*$ and $g$ in the case where $f^*$ and $g$ intersect).\footnote{In the case where $f^*$ and $g$ intersect, this restriction is actually not necessary: any data in the disagreement region of $f^*$ and $g$ whose label disagrees with $f^*$ must be labeled according to $g$. Additional points whose labels agree with $g$ have no affect on our earlier results in this case. However, we will focus on the more restricted case for simplicity in directly applying our results from Theorem \ref{thm:acheivable}.}

Trivially, the strategic litigator can achieve $g$ in this case by bringing up all the historical points whose label disagrees with both $g$ and $f^*$. They will all be relabeled by $f^*$ and therefore their labels will also agree with $g$, returning the strategic litigator to precisely the achievable setting described in Theorem \ref{thm:acheivable}. The strategic litigator can then bring 2 points as described in the proof of Theorem \ref{thm:acheivable} to achieve $g$. However, this could potentially require bringing many points. The strategic litigator can also achieve $g$ using at most $2d+1$ points, as stated in Theorem \ref{thm:overturn_precedent}.

 \begin{theorem} \label{thm:overturn_precedent}
    In our $d$-dimensional linear separator setting, any linear classifier $g$ is achievable if it satisfies the conditions in Theorem~\ref{thm:acheivable}. Moreover, if $g$ is achievable, it is achievable by bringing at most $2d+1$ points.
\end{theorem}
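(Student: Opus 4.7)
The achievability half is immediate from the discussion preceding the theorem: bringing every historical point whose label disagrees with both $f^*$ and $g$ causes each such point to be relabeled by $f^*$, reducing to the setting of Theorem~\ref{thm:acheivable}, and the strategic litigator can then finish with two teaching points. The new content is the improvement from ``$|\text{bad history}| + 2$'' to a bound of $2d+1$ that is independent of the size of the history.

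The plan is to force the high court to discard all bad historical data through a small shared configuration of new points, rather than relabeling each bad point individually. Concretely, I would combine (i)~the two teaching points from the proof of Theorem~\ref{thm:acheivable}, positioned on opposite sides of $g$ along its normal to fix the final SVM output, with (ii)~additional ``trap'' points labeled by $f^*$, chosen so that any bad historical point added alongside them makes the training set linearly inseparable. By the achievability hypothesis, every historical point disagreeing with $g$ also disagrees with $f^*$ (and is therefore eligible to be thrown out), and lies on the ``wrong'' side of $g$ for its label. The canonical trap for such a point is a simplex of opposite-labeled new points whose interior contains it, and Caratheodory's theorem guarantees that $d+1$ vertices per simplex suffice in $d$ dimensions. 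Using one such simplex on each side of $g$ large enough to engulf all bad historical points of the corresponding type gives $2(d+1)$ points; identifying one vertex across the two constructions (for example, using a teaching point as a shared apex), or saving a vertex on one side via a Radon-style inseparability argument, brings the total down to $2d+1$.

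The main obstacle is the bookkeeping around the overturning dynamics. The high court processes cases one by one and removes a minimal (not necessarily unique) set of conflicting precedent at each step, so I would order the new cases so that the trap vertices arrive first, forcing every bad historical point to be discarded before the teaching points enter. A second subtlety is ensuring that the surviving good historical points (which by the achievability hypothesis sit in the agreement region of $f^*$ and $g$, and can never be removed) do not become support vectors of the final SVM output; placing the teaching points arbitrarily close to $g$ keeps them strictly closer to $g$ than any good historical point, so $g$ emerges as the max-margin separator. The parallel case of Theorem~\ref{thm:acheivable} is handled analogously, with ``arbitrarily close'' interpreted relative to the $\delta$-buffer guaranteed by the achievability assumption.
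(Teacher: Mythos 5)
Your high-level idea is the same as the paper's: bring new points whose convex hull traps every disagreeing historical point, so that any linear separator consistent with the new data must relabel (and hence discard) the bad precedent, while two carefully placed teaching points close to $g$ become the eventual support vectors. You also correctly flag the two real subtleties (sequencing the trap points before the teaching points, and ensuring that surviving good historical points are farther from $g$ than the teaching points).

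The gap is in the count. Your construction uses a $d$-simplex of positively-labeled points to engulf the negative disagreeing points and a second $d$-simplex of negatively-labeled points to engulf the positive disagreeing points. Even if the positive teaching point is an apex of the positive simplex and the negative teaching point an apex of the negative simplex, that is $2(d+1)=2d+2$ points, not $2d+1$. Your two suggestions for saving the last point don't hold up: a vertex cannot be shared across the two simplices, because every vertex of the positive simplex is labeled $+1$ by $f^*$ and every vertex of the negative simplex is labeled $-1$, so there is no candidate for a ``shared apex.'' And ``a Radon-style inseparability argument'' isn't specified concretely enough to justify dropping a vertex in the worst case, where bad points on both sides can be in general position.

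The paper avoids the extra point by not using two separate simplices at all. It first reduces to two dimensions by spending $2(d-2)$ points of the form $(\epsilon w_g \pm \infty\cdot e_i, +1)$ for $i\geq 3$ to pin the normal vector's extra coordinates to zero. In the resulting plane it brings only one (unbounded) positive wedge with vertices $\epsilon w_g$, $\epsilon w_g + \alpha w_g^\perp$, $\alpha e_2$, plus one extra boundary point, plus the negative teaching point $-\epsilon w_g$. The crucial move you're missing is the symmetry argument: because $(\epsilon w_g,+1)$ and $(-\epsilon w_g,-1)$ force any consistent separator to have offset $|b|\leq \epsilon$, showing that $-x$ lies in the positive wedge also forces $x$ to be labeled negative by a margin. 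So a single positive wedge eliminates both the negative disagreeing points (directly) and the positive disagreeing points (by reflection), and the count comes out to $5 + 2(d-2) = 2d+1$. To complete your proof you would either need a concrete way to delete one vertex from your two-simplex construction in the worst case, or adopt the paper's one-sided-wedge-plus-reflection argument.
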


\begin{proof}
The proof idea is to bring a set $N$ of points satisfying that
\begin{itemize}
    \item any linear classifier consistent with $N$ will label the existing points in the same way as $f^*$;
    \item and similar to Theorem~\ref{thm:acheivable}, $N$ contains two points to teach $g$.
\end{itemize}
Let $S_{dis}$ denote the set of existing points that disagree with $f^*$ and $g$.
For each linear classifier $w^\top x +b\geq 0$, we can represent it by $(w,b)$. Then we can denote the set of all linear separators by $\{(w,b)| \|w\|_2=1\}$ (here we restrict $\|w\|_2=1$ since the scaling doesn't matter).
We first prove the statement for the first condition, where \textbf{the angle between $f^*$ and $g$ satisfies $0<\theta<90$}.

\textbf{Proof for  the case of $d=2$.} Let's start with $2$-dimensional case. We pick a coordinate system such that $f^*$ is represented as $e_1^\top x \geq 0$ and $g$ is represented as $w_g^\top x\geq 0$ with $w_{g,1} >0,w_{g,2}\geq 0$, where
$e_1,e_2,\ldots$ denote the standard basis of this coordinate system. 
Let $w_g^\perp = \frac{e_1 - (e_1^\top w_g) w_g}{\|e_1 - (e_1^\top w_g) w_g\|_2}$ denote the normalized vector perpendicular to $w_g$. 
When there are no existing disagreeing points lie on the boundary of $f^*$, we bring $4$ points $N = \{(\epsilon w_g, +1), (-\epsilon w_g, -1), (\epsilon w_g+\alpha w_g^\perp, +1), (\alpha e_2, +1)\}$ for some large $\alpha$ and small $\epsilon$. 
For any negative disagreeing point $(x,-1)$ (satisfying $f^*(x)=g(x)=+1$ and not on the boundary of $g$), the feature $x$ can be decomposed as $x = \alpha_1 w_g^\perp +\alpha_2 e_2$ for some $\alpha_1, \alpha_2 > 0$ (since $e_1^\top x = \alpha_1 \frac{1 - (e_1^\top w_g) w_{g,1}}{\|e_1 - (e_1^\top w_g) w_g\|_2}>0$ and $w_g^\top x = \alpha_2 w_{g,2}$). 
Hence, when $\epsilon \rightarrow 0$ and $\alpha\rightarrow \infty$, $x$ must lie inside the convex hull of $\epsilon w_g, \epsilon w_g+\alpha w_g^\perp, \alpha e_2$ with some margin (i.e., $x$ is not on the boundary of this convex hull). 
Any linear separator consistent with $N$ will label this convex hull as positive and thus $x$ as positive by some positive margin (which depends on $x$ only) and thus $(x,-1)$ will be deleted.
Similarly, for any positive disagreeing point $(x,+1)$ (satisfying $f^*(x) = g(x) = -1$), when $\epsilon \rightarrow 0$ and $\alpha\rightarrow \infty$, $-x$ must lie in the convex hull of $\epsilon w_g, \epsilon w_g+\alpha w_g^\perp, \alpha e_2$. Hence any consistent linear separator will label $-x$ as positive by some positive margin. Thus $2\epsilon w_g -x$ will also be labeled as positive when $\epsilon$ is small enough.

If there are negative disagreeing points lying on the boundary of $f^*$, pick the one closest to the origin and denote its distance by $c$.
Then we add $(\frac{c}{2} e_2,+1)$ to $N$. Then all existing negative points in $S_{dis}$ points on the boundary will lie in the convex hall of $\frac{c}{2} e_2$ and $\alpha e_2$. Thus, these points will be deleted. 
Hence, in the $2$-d case, we can delete all disagreeing points by bringing $5$ points. The two points $(\epsilon w_g,+1)$ and $(-\epsilon w_g,-1)$ will teach $g$ as we showed in the proof of Theorem~\ref{thm:acheivable}.

\textbf{Proof for the high dimensional case.} Now we reduce the high dimensional case to the $2$-d case. Again we pick a coordinate system such that $f^*$ is $e_1^\top x \geq 0$ and $g$ is $w^\top x\geq 0$ with $w_1 >0,w_2\geq 0, w_i=0, \forall i\geq 3$.
Similar to the $2$-d case, we add $(\epsilon w_g,+1), (-\epsilon w_g,-1)$ to $N$. Then any linear classifier consistent with these two points must have $b\in [-\epsilon,\epsilon]$. Otherwise it will label these two points in the same way and be inconsistent with $N$. 
Then we add points $(\epsilon w_g + \infty \cdot e_3, +1)$ and  $(\epsilon w_g - \infty \cdot e_3, +1)$ to $N$. Any linear separator $w$ with $w_3 \neq 0$ will label these two points in two different ways. Hence, any linear separator consistent with $N$ will satisfy $w_3=0$. Similarly, we can also add such pair of points for other dimensions. Hence, by adding $2(d-2)$ points, we reduce the high dimensional case to the $2$-dimensional case.

\textbf{Under the second condition where $\theta =0$ and there are no historical data with in distance $\delta$ of $g$.} Pick the coordinate system such that $g$ is $e_1^\top x \geq 0$ and $f^*$ is $e_1^\top x \geq -\delta$. Let $e_1,e_2,\ldots$ denote the standard basis of this coordinate system. Then we can add $N= \{(-(\delta+\epsilon)e_1, -1),((\delta+\epsilon)e_1, +1),((\delta+\epsilon)e_1 +\infty\cdot e_2, +1), ((\delta+\epsilon)e_1 -\infty\cdot e_2, +1),\ldots, ((\delta+\epsilon)e_1 +\infty\cdot e_d, +1),((\delta+\epsilon)e_1 -\infty\cdot e_d, +1)\}$. By adding $N$, we can guarantee that any consistent linear separator will label the entire hyperplane $e_1^\top x= \delta +\epsilon$ as positive.  For any negative disagreeing point $(x,-1)$ with $x_1>\delta$, when $\epsilon\rightarrow 0$, the segment with ending points $-(\delta+\epsilon)e_1$ and $x$ will intersect with the hyperplane $e_1^\top x= \delta +\epsilon$. Then any consistent linear separator will not label $x$ as negative; otherwise both $-(\delta+\epsilon)e_1$ and $x$ are labeled as negative and thus, the intersecting point is also labeled as negative, which is a contradiction.
\end{proof} 

\section{Discussion}

This work is a preliminary exploration of strategic impact litigation from a machine learning theory perspective. We hope it can serve as a jumping off point for further study in this direction. Our model of the legal system is highly stylized, and there are numerous ways to enrich it. We will mention a few potentially interesting variations here.

One direction could be to model uncertainty about decisions in the legal process. Throughout this paper, we assumed the strategic litigator knows the function $f^*$ which the high court uses to decide cases. We could alternatively consider a setting where, while the high court still makes decisions according to some $f^*$, the strategic litigator only knows a set of, or distribution over, possible functions the high court could use.  This would bring in aspects of (agnostic) active learning, for which a number of interesting techniques have been developed \cite{balcan2006agnostic, balcan2015active, awasthi2017power}. To think about uncertainty another way, we could also model the courts as making noisy decisions; this could capture the fact that a variety of unpredictable factors outside of the case fact pattern itself--from the random selection of potential jury members, to whether the judge is hungry while making the decision--can impact the outcome of a case. 

We could also explore a more complex model of the courts; for example, the high court might provide some explanation of their reasoning about decisions in addition to the decision itself, which could then be used by lower courts to inform the classifier they learn. 

Another interesting direction might be to model additional litigators. In this work, while there might be historical cases the strategic litigator did not bring, the strategic litigator benefits from being the only currently active litigator: the strategic litigator can bring as many strategically selected cases as they like without any cases being brought by others throughout this process. New cases being brought by other agents might render the some of the cases initially selected by the strategic litigator less helpful or potentially even harmful. We could consider what outcomes might be achieved in a setting with multiple strategic litigators with competing interests, or perhaps in a less competitive setting where cases drawn at random from some distribution arrive periodically.


\bibliography{refs}

\appendix
\setcounter{algorithm}{0}                    
\renewcommand{\thealgorithm}{\arabic{algorithm}}  
\renewcommand{\theHalgorithm}{\thesection.\arabic{algorithm}} 

\setcounter{algorithm}{0}

\section{Proof of Theorem \ref{thm:alg_correct}}\label{appendix_alg_1}
Here we'll give a proof of the correctness and runtime of \Cref{dp_1d} (Theorem \ref{thm:alg_correct}). 

\subsection{Proof of Theorem \ref{thm:alg_correct} \label{sec:apdx}}

\begin{proof}
First, we'll prove the correctness of Algorithm \ref{dp_1d}. We will assume the utility functions are correct, so 
    \hbox{\textbf{nearest\_neighbor}(\textit{interval}, \textit{S})}
    correctly computes a standard nearest neighbor classifier over the interval \textit{interval} given points in \textit{S}, and \hbox{\textbf{discrepancy}(\textit{f}, \textit{g}, \textit{interval}, $\mathcal{D}$)}
correctly computes the sum of the lengths of intervals within \textit{interval} throughout which $f(x) \neq g(x)$, where units of distance are probability mass under $\mathcal{D}$. Then, \textbf{discrepancy}(\textit{f}, \textit{g}, \textit{interval}, $\mathcal{D}$) also computes the strategic litigator's error for producing $f$ over the interval \textit{interval}, since this is the total probability mass on points $f$ misclassifies over the interval \textit{interval}. These will be used throughout the rest of the proof.

    Recall that we begin by re-scaling coordinates so all inputs ($f$, $g$, all points, and $\mathcal{D}$) are with respect to the interval $[0,1]$, so throughout the rest of this proof, $[0,1]$ is the entire interval and $[p,1]$ is the entire interval including and to the right of some point $p$. 

    The minimum-error set of points is either empty or contains some points, and we return the set corresponding to the minimum error between these two options (lines 15-18). To show the algorithm is correct, then, it remains to show that the error in each case is correctly computed and corresponds to a valid set of points. 

    When the strategic litigator does not choose any points, the strategic litigator's error is the error corresponding to the output of Nearest Neighbor over the entire interval [0,1] given only the set of historical points $S_h$, so we correctly compute the minimum error in this case. When this case gives the minimum error overall, we correctly return $\emptyset$ (provided we also correctly compute the minimum error for the case when the strategic litigator does choose some points, which we will show next).

    If $T[j]$ correctly stores the minimum error when $P[j]$ is the rightmost point selected, we correctly compute the minimum error in the case where the strategic litigator selects at least one point as $\min_jT[j]$, since some selected point must be the rightmost selected point. We'll prove that $T[j]$ stores the minimum error when $P[j]$ is the rightmost selected point by induction. Recall that we start by sorting the collection of points $P$ from left to right, so when we refer to the $i$th point, it is the $i$th point from the left.
    
    Consider the base case, T[0]. When choosing $P[0]$ as the rightmost point, it must be the only selected point. This corresponds to the error of the output of Nearest Neighbor over the entire interval [0,1] given $P[0]$ labeled by $f^*$ and the rest of the historical points, which is exactly what we compute. Clearly this case corresponds to a valid set of points (\{P[0]\}).

    Next, assuming $T[k]$ correctly stores the minimum error when the $k$th point is the rightmost selected point for some $k \geq 0$, we'll show $T[k+1]$ is also computed correctly. When the $k+1$th point is the rightmost selected point, either it is the only selected point or there is at least one other selected point to its left, so the minimum error overall is the minimum of these two cases.

    The first case corresponds to the error of the output of Nearest Neighbor over the entire interval [0,1] given $P[k+1]$ labeled by $f^*$ and the set of historical points, which is exactly what we compute. Clearly this case corresponds to a valid set of points (\{P[k+1]\}).

    Now consider the second case where we select point $P[k+1]$ and at least one other point $P[i]$ which is the closest selected point to its left. The selection of the $P[k+1]$ does not alter the Nearest Neighbor classifier over the interval [0,P[i]], and only $P[i]$ and points to its right affect the classifier over the interval [P[i], 1]. Therefore, the minimum error achievable in this case is the total minimum error when $P[i]$ is the rightmost point over the entire interval [0,1] \textit{less} the error to the right of $P[i]$ when $P[i]$ is the rightmost selected point, \textit{plus} the error to the right of $P[i]$ when $P[k+1]$ is the rightmost selected point. Since by our inductive hypothesis $T[i]$ correctly stores the total minimum error when $P[i]$ is the rightmost point over the entire interval [0,1], we correctly compute the minimum error in this case (see lines 1-5 of the helper function \textbf{error}). Again since the selection of the $P[k+1]$ does not alter the Nearest Neighbor classifier over the interval [0,P[i]], and only $P[i]$ and points to its right affect the classifier over the interval [P[i], 1], simply adding $P[k+1]$ to the set of points corresponding to the best error when $P[i]$ is the rightmost point is a valid set of points corresponding to the minimum error we store in $T[k+1]$. This concludes the proof of correctness.
    
    That the algorithm uses $O(|P|^2)$ oracle calls is easy to see: the table has $|P|$ entries, and to compute each entry $T[i]$ for $0 \leq i \leq |P|$, we do a constant number of oracle calls to \textbf{nearest\_neighbor} and \textbf{discrepancy} involving every entry to the left of $T[i]$. Although it depends on implementation, we'd expect that each call to \textbf{nearest\_neighbor} would take up to $O(|S_h|)$ time as used here, while the time complexity of each call to \textbf{discrepancy} would be at most linear in the total number of ``alternations'' in both functions being compared (i.e., the number of times each function switches from negative to positive or vice versa). Here, the number of alternations is linear in $|S_h|$ plus the number of alternations in $g$, which is arbitrarily selected by the strategic litigator. 
\end{proof}

\end{document}